\newtheorem{theorem}{Theorem}
\theoremstyle{plain}
\newtheorem{definition}{Definition}
\newtheorem{lemma}{Lemma}
\newcommand{\diam}{\mathsf{diam}}
\newcommand{\dist}{\mathsf{dist}}
\newcommand{\ecc}{\mathsf{ecc}}
\newcommand{\G}{\mathfrak{G}}
\newcommand{\I}{\mathfrak{I}}
\newcommand{\IP}{\mathsf{IP}}
\newcommand{\N}{\mathbb{N}}
\newcommand{\rad}{\mathsf{rad}}
\newcommand{\RBtw}{\mathsf{RBtw}}
\newcommand{\RDeg}{\mathsf{RMax}}
\newcommand{\SDeg}{\mathsf{SMax}}
\newcommand{\SBtw}{\mathsf{SBtw}}
\newcommandx{\unsure}[3][1=]{\texthl{#2}\todo[linecolor=red,backgroundcolor=red!25,bordercolor=red,#1]{#3}}
\newcommandx{\change}[3][1=]{\texthl{#2}\todo[linecolor=blue,backgroundcolor=blue!25,bordercolor=blue,#1]{#3}}
\newcommandx{\info}[3][1=]{\texthl{#2}\todo[linecolor=olive,backgroundcolor=olive!25,bordercolor=olive,#1]{#3}}
\newcommandx{\improve}[3][1=]{\texthl{#2}\todo[linecolor=purple,backgroundcolor=purple!25,bordercolor=purple,#1]{#3}}
\newcommandx{\thiswillnotshow}[2][1=]{\todo[disable,#1]{#2}}
\title{From the Periphery to the Center:\\ {\large Information Brokerage in an Evolving Network}}
\author{
Bo Yan$^1$,
Yiping Liu$^1$,
Jiamou Liu$^2$,
Yijin Cai$^1$,
Hongyi Su$^1$,
Hong Zheng$^1$,
\\
$^1$Beijing Institute of Technology\\
$^2$The University of Auckland\\
2120161088@bit.edu.cn$^*$,
jiamou.liu@auckland.ac.nz
}
\begin{document}

\maketitle

\begin{abstract}
Interpersonal ties are pivotal to individual efficacy, status and performance in an agent society.
This paper explores three important and interrelated themes in social network theory: the center/periphery partition of the network; network dynamics; and social integration of newcomers. We tackle the question: How would a newcomer harness information brokerage to integrate into a dynamic network going from periphery to center? We model integration as the interplay between the newcomer and the dynamics network and capture information brokerage using a process of relationship building. We analyze theoretical guarantees for the newcomer to reach the center through tactics; proving that a winning tactic always exists for certain types of network dynamics. We then propose three tactics and show their superior performance over alternative methods on four real-world datasets and four network models. In general, our tactics place the newcomer to the center by adding very few new edges on dynamic networks with $\approx 14000$ nodes.
 \end{abstract}

\section{Introduction}
An agent society (or system) is defined by patterns of dyadic links between individuals. Research on social networks has greatly advanced our understanding of how traits such as ties, modules, and  flow, impact agents' positions \cite{borgatti2011network}. Gaining a central position is seen as beneficial thanks to the relative easiness it brings to receive diverse information and exercise influence over other agents, i.e., a central position defines an {\em information broker} who accesses and integrates information through social links. This notion has wide implications on roles, status and leadership in organizations \cite{liu2016hierarchies} and has recently facilitated applications such as IoT \cite{galov2015design} and semantic web \cite{honkola2010smart}.


A predominant meso-scale feature of many complex networks is the emergence of tiers: Sitting at the center is a densely-connected cohesive {\em core}, and on the outskirts a loosely-knit {\em periphery}. 
This paper asks the question: {\em How would a newcomer harness information brokerage to integrate into a dynamic network going from the periphery to the center?} Two assumptions are made: (a) We focus on networks that have a distinguished center, e.g., a {\em core/periphery structure}; and (b) We examine the decisions and processes of relationships building. The aim is to approach the question through a formal, algorithmic lens. This demands settling two issues: (1) The first concerns representations of center and brokerage. 
We stay consistent with the framework defined in \cite{moskvina2016build} and treats information brokers as agents that give the newcomer low {\em eccentricity}, hence getting into the (Jordan) center of the network \cite{wasserman1994social}. (2) The second is about network dynamics. As the network evolves with time, the model must make sense for dynamic networks. This sets this work apart from previous works on information brokerage where only static networks are of concern and brings extra complications to the problem. Even though we phrase the problem assuming additive changes to the network (as, e.g., a citation network), our notions and techniques also apply to fully dynamic networks.

\paragraph*{\bf Contribution.} (1) We formulate integration as repeated, parallel interplays between a newcomer and the ensemble of other agents in the network it aims to join. The network evolves in the form of a sequence of snapshots in discrete time, which is determined by the initial network, the network's own evolution trace, and the newcomer's strategy for adding ties. The goal of the newcomer is to adopt a tactic that moves it from periphery to center within a finite number of steps regardless of dynamic changes of the network. (2) We study the existence of such strategies under certain reasonable conditions. In particular, when the center is  bounded -- as in many networks with a center/periphery structure -- the newcomer has a winning tactic. (3) We propose three simple tactics and compare them with two methods from \cite{moskvina2016build} which are designed for the same problem on static networks. Our tactics outperform the alternatives over four real-world dynamic networks.
We also propose four dynamic network models with varying core/periphery-ness: dynamic preferential attachment, Jackson-Rogers, rich-club and onion models, and analyze the performance of tactics over them. Our tactics bring the newcomer to the center by creating less than 10 new edges in all of the experiments performed.


\paragraph*{\bf Related work.} We focus on the individual strategies on networks with dynamic core periphery structure, which related to three categories: {\em core-periphery structure}, {\em network building problem}.
 Game-theoretical research on network formation focuses on equilibria among rational agents \cite{jackson2010social,branzei2011social}; in contrast, our paper complements this body of work by investigating optimal tactics for a single agent in a dynamic setting. Motivated from \cite{uzzi2005build}, \cite{moskvina2016build} initiates the static version of the problem under investigation, namely, building the least amount of edges to bring a newcomer to the network center.
We build on their work and study dynamic networks. 
As opposed to the static case, a desired tactic may not exist under certain forms of network dynamics.

 The definition of a graph center goes back to the work of Jordan in the 19th century and eccentricity belongs to a family of distance-based centrality indices \cite{borgatti2006graph}. Despite its simplicity, eccentricity has been useful in many places, e.g. from analyzing islands networks \cite{hage1995eccentricity} and
 the rise of the Medici family in marriage alliance network \cite{padgett1993robust} to mapping Hollywood actors/actresses \cite{harris2008combinatorics}. Although the center can be identified for any network, we specifically target at core/periphery structures \cite{borgatti2000models}. Observations of such tiered structures root in economics where the world is divided between industrial, ``core'' nations and agricultural, ``peripheral'' nations  \cite{snyder1979structural,krugman1995globalization}. Similar structures are subsequently witnessed in, e.g., social networks \cite{christley2005infection}, scientific citation \cite{doreian1985structural},
 and trading networks \cite{fricke2015core}. Agents in the core, being hubs, enjoy many benefits such as control over information and domination of resources. A crucial feature of the core, apart from its central position and density, is the stability over time \cite{csermely2013structure,rombach2017core}.

\section{Network Building in a Dynamic Network}\label{sec:problem}
A {\em social network} is an undirected graph $G\!=\!(V,E)$;
 $V$ is a set of vertices (or {\em agents}), an edge $\{u,v\}\!\in\!E$, denoted by $uv$, represents links between agents $u,v$.
The {\em distance} $\dist_G(v,u)$ is the shortest length of any path between $v$ and $u$; for $S\subseteq V$, set
$\dist_G(v,S)\!=\!\min_{u\in S} (\dist_G(v,u))$. The {\em eccentricity} of $v\in V$ is $\ecc_G(v)\!=\!\max_{u\in V} \dist_G(u,v)$. 
The {\em radius} and {\em diameter} of $G$ are, resp., 
 $\rad(G)\!=\!\min_{i\in V}\ecc(i)$ and 
 $\diam(G)\!=\!\max_{i\in V}\ecc(i)$ \cite{harris2008combinatorics}.
The {\em center} of $G$ is the set $C(G)= \{v\in V\mid \ecc(v)=\rad(G)\}$.



A {\em dynamic network} evolves in discrete-time, i.e., it consists of a (potentially infinite) list of networks $\G\!=\!G_0,G_1,G_2,\ldots$ where $G_i\!=\!(V_i,E_i)$ is the network {\em instance} at {\em timestamp} $i\!\geq\!0$.
We define {\em the set of vertices} of a dynamic network $\G$ as the set $V_\G=\cup_{i\in \N} V_i$. As $\G$ may contain infinitely many timestamps, $V_\G$ may be infinite. For any $v\in V_\G$, the set of neighbors $E_\G(v)$ is $\{u\in V_\G \mid vu\in E_i, i\in \N\}$. 
As individuals usually have an only bounded capacity to manage social links, we require that $E_\G(v)$ being finite for all  $v$. Thus the graph $(V_\G, \cup_{i\in \N} E_i)$ stays a locally-finite graph.

Two caveats exist:~Firstly, we should clarify what forms of structural changes may happen. In principle, any addition/removal of vertices/edges may occur. For the majority of this paper, however, we focus on a simpler form of dynamics where the network only makes {\em additive changes},i.e., the only allowable updates are the addition of vertices/edges. Secondly, we need a policy regarding the frequency of timestamps. One natural method is to separate consecutive instances with a fixed period, e.g., instances of a social network may be generated by day.
Another common approach is to add a timestamp only when an update occurs, e.g., a timestamp may correspond to when certain update happens between two individuals causing a new edge to form.
The exact meaning of a timestamp
 should be up to the actual application scenario.

We study the process that expands a network with new edges.
Imagine an outside agent who aims to integrate into the network and explore information within.
{\em Information brokers} refer to a set of appropriately located vertices who collectively give the newcomer
good access to the information within
the network. More formally, for $G=(V,E)$, $H=(U,F)$ ($V,U$ may or may not overlap), $G\oplus H$ denotes the network $(V\cup U, E\cup F)$. Throughout, we use $u$ to denote a {\em newcomer}. For any subset $S\subseteq V$, define
$S\otimes u$ as $(S\cup u, \{vu\mid v\in S\})$.
Thus $G\oplus (S\otimes u)$ is the resulting network obtained after integrating $u$ into $G$ via building links between $u$ and every vertex in $S$. The following definition is proposed by \cite{moskvina2016build}.
\begin{definition}\label{def:broker set}
A {\em broker set} for $G$ is  $B\subseteq V$ such that $\ecc_{G\oplus (B\otimes u)}(u)\!=\!\rad(G\oplus (B\otimes u))$.
\end{definition}
The intuition behind the definition is that by making contacts with members of a broker set, $u$ can gain maximum access to the network. As a broker set always exists for a graph $G$, the question then focuses on the size of broker sets. The
 {\em minimum broker set} problem asks for a broker set $B\subseteq V$ with the smallest cardinality and is shown to be NP-complete \cite{moskvina2016build}.

The main goal of this paper is to extend the above notions to a dynamic setting, capturing the process in which a newcomer builds ties with an evolving network to gain maximal access of information in the network.
Note that connecting to brokers embodies a dynamic process: As building relations requires effort and time, a broker set $B$ is built iteratively where edges are added for $u$ one by one, until its eccentricity becomes $\rad(G\oplus (B \otimes u))$.
Over a dynamic network $\G$, $u$ would act while the network evolves \cite{yan2017dynamic}.
From $u$'s perspective, its position relies not only on its own actions but also on the ensemble of all other agents in the network. At any timestamp, both parties make moves simultaneously, affecting the next network instance. Agent $u$'s move consists of building new edges to the current graph; the others party's move consists of updates of the form ``adding a new edge (either between two existing nodes, or a new vertex and an existing node)''.  Formally, for any network $G$, an {\em expansion} of $G$ is a network $F$ whose every connected component contains at least one node in $G$, i.e., $G\oplus F$ is a network achieved by the two types of  updates. 

\begin{definition} Fix an {\em initial network} $G_0=(V_0,E_0)$ and a {\em newcomer} $u\notin V_0$. For $k,\ell\in \N$,
an {\em integration process} (IP) is a dynamic network $\I=G_0, G_1, G_2, \ldots$ where $\forall i\geq 0$
\[
    G_{i+1} = G_i\oplus (F_i\oplus (S_i\otimes u))
\]
where $S_i$ is a set of vertices in $G_i$ that are not adjacent to $u$, and $F_i$ is an expansion of $G_i$ that does not contain $u$. 
\end{definition}
Conceptually, one can view an IP as an iterative interplay between $u$ and the network who acts as a sort of ``opponent''.
Progressing from iteration $i\geq 0$ to $i+1$
the network changes  by (i) ``attaching'' a subgraph $F_i$; this may bring more vertices and edges to $G_i$; and (ii) adding an edge between $u$ and all vertices in $S_i$. The sequence of edges $F_1,F_2,\ldots$ is called the {\em evolution trace} and the sequence of sets $S_1,S_2,\ldots$ is called the {\em newcomer strategy} of $\I$. The IP is uniquely determined by the initial network $G_0$, actions of the network (in the form of an evolution trace) and the actions of $u$ (in the form of a newcomer strategy).
The definition of a dynamic network means that an IP must satisfy a {\em locally-finiteness} {\bf (LF)} condition:

\smallskip

\noindent {\bf (LF)} Any agent (including $u$) eventually stops adding new edges, i.e., $\forall v\!\in\!V_\I \,\exists r_v\!\in\!\N\, \forall r'\!\geq\!r_v\colon$ $v$ does not appear in the network $F_{r'}\oplus (S_{r'}\otimes u)$.


\section{Information Broker in a Dynamic Network}

A question arises as to how the newcomer may choose its strategy during an IP to get into the network center. 

\begin{definition}
An IP $\I=G_0,G_1,\ldots$ is a {\em broker scheme} of $u$ if $u\in C(G_r)$ for some $r\in \N$.
\end{definition}
%
\noindent We are interested in tactics that construct a broker scheme regardless of the evolution trace.
Here, our attention is on a type of strategies where $u$ makes decisions about $S_i$ at each timestamp given only the current network instance $G_i$.
\begin{definition}
A {\em tactic} of $u$ is a function $\tau$ defined on the set of all networks such that $\tau(G)\subseteq V$ for any $G=(V,E)$ and $\forall v\in V\colon uv\in E\Rightarrow v\notin \tau(G)$. 
%
An IP $\I=G_0,G_1,\ldots$ is said to be {\em consistent} with $\tau$ if its newcomer strategy $S_1,S_2,\ldots$ satisfies that $\forall i>0\colon S_i=\tau(G_{i-1})$; we use $\IP(\tau)$ to denote the class of all IPs consistent with $\tau$. 
\end{definition}
\noindent We generalize information brokerage to the dynamic context. 

\begin{definition} Let $\mathcal{P}$ be a collection of IPs.
A {\em broker tactic} for $\mathcal{P}$ is a tactic $\beta$ such that $\mathcal{P}\cap \IP(\beta)\neq \varnothing$ and any $\I\in \mathcal{P}\cap \IP(\beta)$ is a broker scheme.
\end{definition}
\noindent The rest of the section studies the existence of broker tactics.
\begin{definition}
 Fix numbers $k>0$ and $\ell\geq 0$. An IP $\I$ is {\em $(k,\ell)$-confined} if  $F_i$ of the evolution trace $F_1,\ldots$ contains at most $\ell$ and any $|S_i|\leq k$ for all $i\geq 1$.
\end{definition}
\noindent $(k,0)$-confined IP is static where broker tactics always exist. 
%
\begin{theorem}\label{thm:k vs 1}
For any $k>1$, there exists a broker tactic of $u$ for the class of $(k,1)$-confined IPs.
\end{theorem}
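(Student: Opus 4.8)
The plan is to treat the integration process as a pursuit game in which the network (the ``opponent'') reveals at most one new edge per round while $u$, using its budget of $k\ge 2$ edges, tries to force $u\in C(G_r)$ at some finite $r$; since a broker scheme requires centrality only once, and (LF) forces $u$ to add finitely many edges in total, the whole task is to reach a single \emph{centred instant} within a finite, self-terminating prefix. My first step is a reduction: because a shortest path from $u$ must leave through a neighbour, $\ecc_{G\oplus(N(u)\otimes u)}(u)=1+\max_{w}\dist_{G}(w,N(u))$, so $u$ is central exactly when its accumulated neighbourhood $N(u)$ is a near-optimal ``centre cover'' of the current instance. The additive nature of the dynamics is the key lever: every edge $u$ builds, and every edge the opponent builds between existing vertices, is permanent and can only shrink distances, so partial progress toward such a cover is never lost.

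Next I would fix the tactic $\beta$. In each round $\beta$ connects $u$ to the (at most two) vertices realising its current eccentricity --- the endpoints of a longest shortest path from $u$ --- and spends any remaining budget on the single most recently introduced vertex. Two connections are precisely what is needed to ``pin'' both extremes at once: on a path $v_0v_1\cdots v_n$, joining $u$ to $v_0$ and $v_n$ yields a cycle of length $n+2$ in which every vertex, $u$ included, has eccentricity $\lfloor (n+2)/2\rfloor=\rad$, so $u$ is central; moreover if the opponent simultaneously attaches one pendant at an extreme, that pendant sits at distance $2$ from $u$ and leaves $\ecc(u)=\rad$ unchanged. This balanced-cycle computation is the exact witness on the hardest instances (growing rays), and the general tactic extends it by always re-pinning the two current extremes while covering the lone vertex the opponent may have revealed.

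The third step is a progress argument. I would take the potential $\Phi(G)=\ecc(u)-\rad(G)\ge 0$ (equivalently, the number of vertices lying farther than $\rad-1$ from $N(u)$) and show it reaches $0$ in finitely many rounds. Because the evolution trace is $(k,1)$-confined, one round of the opponent raises $\Phi$ by at most a bounded amount --- it either attaches one pendant or adds one distance-shrinking chord --- whereas $\beta$, having $k\ge 2$ edges, can both neutralise that perturbation (covering the new pendant, or re-pinning the extreme it lengthened) and still strictly reduce $\Phi$. Hence a centred instant $u\in C(G_r)$ occurs after finitely many rounds, and the process is a broker scheme. At that instant $u$ has built only finitely many edges and its target cover is recognisable from the current (monotone) graph, so $\beta$ may idle thereafter; this secures (LF) for $u$ and makes the sequence a legitimate IP. For non-emptiness of $\mathcal P\cap\IP(\beta)$, the idle opponent yields an IP in which $\beta$ reaches the centre within $\lceil |B|/k\rceil$ rounds for a broker set $B$ of $G_0$, reusing the static guarantee that $(k,0)$-confined IPs always admit broker tactics.

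I expect the main obstacle to be the interaction of \emph{simultaneity} with the \emph{moving frontier}: $u$ chooses $S_i=\tau(G_{i-1})$ without seeing the opponent's current edge, and the opponent can keep extending whichever extreme $u$ just pinned. The heart of the proof is therefore the strict-progress lemma --- that $k\ge 2$ edges per round genuinely dominate one added edge per round, so the opponent cannot sustain $\Phi>0$ forever on a trace along which $u$ is permitted to terminate. A subtle point inside this lemma is that $\ecc(u)$ and $\rad$ respond asymmetrically to $u$'s new edges: $u$'s own distances ignore the shortcuts created through $u$, but $\rad$ does benefit from them, so bounding $\Phi$ requires tracking how each new $u$-edge reshapes other vertices' eccentricities. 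Making this accounting tight is also what pins the threshold at $k>1$, matching the failure of a single edge per round to close two frontiers the opponent can alternately reopen.
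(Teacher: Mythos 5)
Your proposal has a genuine gap at exactly the point you identify as ``the heart of the proof'': the strict-progress lemma for the potential $\Phi(G)=\ecc_G(u)-\rad(G)$ is asserted but never established, and for the tactic you describe it is in fact false. Connecting $u$ to the two extremes of \emph{one} longest shortest path does not lower $\ecc(u)$ when many vertices realise the eccentricity: take $G_0$ a star with centre $c$ and leaves $l_1,\dots,l_m$; after $u$ joins $l_1,l_2$ we have $\ecc(u)=3$ via $u\,l_1\,c\,l_j$ for every uncovered leaf $l_j$, and pinning two more leaves per round leaves $\Phi=1$ unchanged for $\Theta(m)$ rounds while the opponent keeps adding leaves. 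Worse, each edge you add at $u$ can \emph{decrease} $\rad$ (other vertices get shortcuts through $u$), so $\Phi$ can even increase; the same defect afflicts your parenthetical alternative potential. The tactic itself is also underspecified: ``the endpoints of a longest shortest path from $u$'' is a single vertex (the other endpoint is $u$), your cycle computation instead uses a diametral path avoiding $u$, and for $k=2$ the budget does not cover two extremes plus the newest vertex. What would make the argument go through is not a per-round potential decrease but a global counting bound, and that is precisely what the paper does.

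The paper's proof is much blunter and avoids the moving-frontier analysis entirely. The tactic is: connect $u$ to \emph{any} $k$ non-neighbours each round. Since $k\ge 2$ and the opponent introduces at most one new vertex per round, after $t=(k-1)|V_0|$ rounds at most one vertex of the current instance is not adjacent to $u$, so $\ecc(u)\le 2$ forever after. The only remaining work is the endgame: either $u$ becomes adjacent to everything (then $\ecc(u)=1$ and $u$ is central), or the opponent keeps attaching pendants $x_s y_s$; by \textbf{(LF)} some $y_s$ must itself be a previously attached pendant, at which point no vertex has eccentricity $1$, so $\rad(G_s)=2=\ecc_{G_s}(u)$ and $u\in C(G_s)$. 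Note that this endgame case analysis is a genuine part of the theorem that your sketch does not address: even once $\ecc(u)=2$ is secured, one still has to argue that $\rad$ is not $1$ at some instant. If you want to salvage your approach, replace $\Phi$ by the cardinality of the uncovered set $\{w\mid \dist_G(w,N(u)\cup\{u\})>1\}$, observe it shrinks by at least $k-1\ge 1$ net per round under the arbitrary-$k$-non-neighbours tactic, and then supply the paper's radius argument for the terminal phase.
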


\begin{proof}
Define the tactic $\tau$ of $u$ as follows: Let $\tau(G)$ be an arbitrary set of $k$ vertices in $G$ not adjacent to $u$ (or the set of all vertices not adjacent to $u$ if more than $k$ such vertices exist).
We claim that any IP $\I=G_0,G_1,\ldots$ consistent with $\tau$ is a broker scheme of $u$.
Since $k>1$, for any timestamp $s>t=(k-1)|V_0|$, the network instance $G_{s}$ contains at most one vertex not-linked to $u$. Moreover, if $u$ is linked to all other vertices in $G_s$, $\ecc_{G_s}(u)=1<\rad(G_s)$ and then $\I$ is a broker scheme. 

Now suppose that for all $s>t$, the evolution process adds a new vertex, say $x_{s}$ and an edge $x_sy_s$ where $y_s\in V_{s-1}$. Then there must be a timestamp $s>t$, such that $y_s=x_{s'}$ for some $t<s'<s$, as otherwise some vertex in $G_t$ will have an infinite degree. In the instance $G_s$, the furthest vertex from $u$ is $x_s$ with $\dist_{G_s}(u,x_s)=2$ and hence $\ecc_{G_s}(u)=2$. Clearly, $y_s$ is not connected to all vertices in $V_s$ and thus $\rad(G_s)=2$. Therefore at this timestamp, $u\in C(G_s)$, and the IP is a broker scheme.
\end{proof}


\begin{theorem}\label{thm:not exist}
No broker tactic exists for the set of $(k,\ell)$-confined IPs when $\ell\!\geq \!2$.
\end{theorem}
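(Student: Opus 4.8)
The plan is to argue adversarially. Fix an arbitrary tactic $\beta$; I will construct a single $(k,\ell)$-confined integration process $\I=G_0,G_1,\dots$ that is consistent with $\beta$ yet satisfies $u\notin C(G_i)$ for \emph{every} $i$, so that $\I$ is not a broker scheme and hence $\beta$ fails the definition of a broker tactic. Since $\beta$ is a fixed function of the current instance, the network (the ``opponent'') may read off $\beta$'s responses in advance and shape its evolution trace $F_1,F_2,\dots$ accordingly. The structural fact I want to exploit is the locally-finiteness condition {\bf (LF)}: applied to the vertex $u$ itself, it forces $u$ to add only finitely many edges over the whole process, so there is a time $T$ after which $u$ never connects again and its neighbourhood $B=E_\I(u)$ is fixed and finite.

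For the construction, the opponent keeps a single path receding away from $u$. Concretely it maintains a path $P=z_0,z_1,z_2,\dots$ anchored in $G_0$ and, at every step, appends two fresh vertices to the current endpoint using the two edges its budget $\ell\ge 2$ allows; an $\ell=1$ opponent could append only one, which is exactly what lets the newcomer of Theorem~\ref{thm:k vs 1} outrun it once $k>1$. The purpose of the two-per-step growth is that the endpoint recedes at least as fast as $u$ can chase it, so at no finite time does $u$ become adjacent to the whole current instance; in particular some peripheral vertex always sits at distance $\ge 2$ from $u$. Because $B$ is finite, there is a bounded index $c$ with $B\cap P\subseteq\{z_0,\dots,z_c\}$, whereas after time $T$ the opponent drives the length of $P$ to infinity.

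The eccentricity bookkeeping then separates $u$ from the true centre. For large $t$ the instance $G_t$ is essentially the path $P$ of length $t$ with $u$ pendant on its left portion: the right endpoint $z_t$ lies at distance $1+\dist(z_t,B)\ge t-c+1$ from $u$, so $\ecc_{G_t}(u)\ge t-c+1$, while the midpoint $m\approx z_{t/2}$ has $\ecc_{G_t}(m)\le t/2+O(1)$, since a shortcut through $u$ only helps inside the left portion. Hence $\ecc_{G_t}(m)<\ecc_{G_t}(u)$ once $t>2c$, giving $u\notin C(G_t)$; for the finitely many earlier timestamps the same comparison (or simply the coexistence of an uncovered peripheral vertex with a better-placed path vertex) keeps $u$ off the centre. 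As $u\notin C(G_i)$ for all $i$, the process is not a broker scheme.

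The main obstacle is the transient regime: it is not enough that $u$ is off-centre for large $t$; it must be off-centre at every timestamp, and the genuine danger is that $u$ momentarily becomes central by covering the whole current instance (at which moment $\ecc(u)=1$ and $u\in C$). Ruling this out is precisely where the growth race enters — the opponent must extend the periphery at least as fast as the newcomer's per-step budget lets it shrink the set of non-neighbours, so that an uncovered far vertex always survives. Verifying afterwards that the constructed trace is an honest $(k,\ell)$-confined IP (each $F_i$ uses at most $\ell$ edges, each $|S_i|\le k$, and {\bf (LF)} holds for $u$ and for every freshly added vertex) is then routine, and the contrast with Theorem~\ref{thm:k vs 1} is exactly that a single edge per step cannot sustain this race against a newcomer with $k>1$.
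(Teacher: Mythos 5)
Your adversarial setup is the right one, and your receding-path construction is in the same spirit as the paper's, but there is a genuine gap at exactly the point you flag as ``the main obstacle'': your eccentricity comparison (endpoint at distance $\ge t-c+1$ from $u$ versus a midpoint with eccentricity $\approx t/2$) only establishes $u\notin C(G_t)$ for \emph{sufficiently large} $t$, after the time $T$ at which $u$'s neighbourhood $B$ has frozen. The theorem needs $u\notin C(G_i)$ at \emph{every} timestamp --- one timestamp with $u\in C(G_i)$ already makes the process a broker scheme and kills the counterexample. Your proposed fix for the transient regime, that ``an uncovered far vertex always survives,'' only gives $\ecc(u)\ge 2$, which does not exclude $u$ from the center (the radius may also be $\ge 2$). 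A second, related weakness is the appeal to {\bf (LF)} to get $B$ finite and fixed: {\bf (LF)} is a validity condition on the process, not a property of an arbitrary tactic, so for tactics that keep returning nonempty sets along your trace you have not produced a member of $\IP(\beta)$ at all, and you would separately have to argue that $\IP(\beta)$ meets the class emptily or switch traces.

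The paper closes exactly this gap with a per-step invariant rather than an asymptotic one, and you should be able to retrofit it to your construction. Maintain inductively that $u\notin C(G_i)$; at each step let $v$ be the furthest vertex from $u$ (at distance $r$) in the instance $H$ obtained after $u$ plays $\tau(G_i)$, and have the opponent append a fresh path of $\ell\ge 2$ vertices beyond $v$. Then $\ecc_{G_{i+1}}(u)=r+\ell$, while the neighbour $w$ of $u$ on a shortest $u$--$v$ path has $\dist(w,x_\ell)=r+\ell-1$ and $\dist(w,y)\le\dist(u,y)+1\le r+1$ for every other vertex $y$, so $\ecc_{G_{i+1}}(w)\le r+\ell-1<\ecc_{G_{i+1}}(u)$; the hypothesis $\ell\ge 2$ is used precisely to make $r+1\le r+\ell-1$. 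This exhibits a concrete vertex beating $u$ at \emph{every} timestamp, independently of how many edges $u$ has added so far and without invoking {\bf (LF)}, which is the ingredient your write-up is missing.
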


\begin{proof} Take a tactic $\tau$ of $u$. Inductively construct $(k,\ell)$-confined IP $G_0,G_1,\ldots$ in  $\IP(\tau)$ where $u\notin C(G_i)$ for any $i\geq 0$:
Suppose $u\!\notin\!C(G_i)$ at instance $G_i\!=\!(V_i,E_i)$. Consider the graph $H\!=\!G_i\oplus (\tau(G_i)\otimes u)$. If $u$ is not adjacent to any vertex in $H$, then clearly $u$ will not reach the center of $G_{i+1}$ regardless of $F_{i+1}$. Otherwise, take  $v\!\in\!V_i$ that is the furthest from $u$. Let $r\!=\!\dist_{H}(u,v)$.
Let $P\!=\!v,x_1,\ldots,x_\ell$ be a simple path attaches to $v$ at one end, $x_1,\ldots,x_\ell$ do not belong to $V_i$, and the edges are $\{vx_1,x_1x_2,\ldots,x_{\ell-1}x_\ell\}$. Now set $G_{i+1}=G_i\oplus (P\oplus (\tau(G_i)\otimes u))=H\oplus P$.

 Clearly, $\ecc_{G_{i+1}}(u)= r+\ell$ as the furthest vertex from $u$ is $x_\ell$. Now pick a path between $u$ and $v$, and let $w$ be the vertex along this path that is adjacent to $u$; $\dist_{G_{i+1}}(w,x_\ell)=r+\ell-1$, and for all other vertices $y$, $\dist_{G_{i+1}}(w,y)\leq \dist_{G_{i+1}}(u,y)+1\leq r+1$. As $\ell\geq 2$, $\ecc_{G_{i+1}}(w)\leq r+1< \ecc_{G_{i+1}}(u)$. This means that $u\notin C(G_{i+1})$.
\end{proof}
One can view $(k,\ell)$ as specification of an IP protocol: When $k\!>\!1\!=\!\ell$, the newcomer $u$ gains an upper hand to reach the center; If, on the other hand, $\ell\!>\!2$, $u$ may never ``catch up'' with the rest of the network. The only case left is when $k\!=\!\ell\!=\!1$, and we will focus on this case in our experiments.

To further investigate the existence of a broker tactic, we look closely at the proof of Thm.~\ref{thm:k vs 1}. The non-existence of a broker tactic is due to the fact that the center ``shifts'' as new vertices are added. This may not be the case in real-life, e.g., a core/periphery structure contains a highly stable network core meaning that the network center would be relatively stable \cite{csermely2013structure,rombach2017core}. 

\begin{definition}
Let $\G\!=\!G_0,G_1,\ldots$ be a dynamic network. We say that $\G$ has a {\em bounded center} if there exists a vertex $c\!\in\!V_\G$, and $d\!\in\!\N$ such that $\forall i\geq 0 \forall v\in C(G_i)\colon \dist_{G_i}(v,c)\leq d$.
\end{definition}
Bounded center property means that the center of the dynamic network would not expand or shift indefinitely.
The following fact easily follows from {\bf (LF)}.
\begin{lemma}\label{lem:bounded center}
$\G$ is a dynamic network with a bounded center if and only if the set $\mathcal{C}=\bigcup_{i\in \N} C(G_i)$ is a finite set.
\end{lemma}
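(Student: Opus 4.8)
The plan is to prove the two implications separately, using local finiteness for the forward direction and the additive nature of the dynamics for the converse. Throughout I would write $G_\infty=(V_\G,\bigcup_{i\in\N}E_i)$ for the limit graph; as already observed in the text, {\bf (LF)} guarantees that $G_\infty$ is locally finite, i.e.\ every vertex has finite degree in $G_\infty$.

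For the direction ``bounded center $\Rightarrow$ $\mathcal{C}$ finite'', suppose $c\in V_\G$ and $d\in\N$ witness the bounded-center property. First I would show that the ball $B=\{w\in V_\G\mid \dist_{G_\infty}(w,c)\le d\}$ is finite; this is a routine induction on $d$, whose inductive step uses only that each vertex of $G_\infty$ has finitely many neighbours. Then I would note that every $G_i$ is a subgraph of $G_\infty$, so $\dist_{G_\infty}(v,c)\le \dist_{G_i}(v,c)$ for all $v$ and $i$. Hence any $v\in C(G_i)$ satisfies $\dist_{G_\infty}(v,c)\le\dist_{G_i}(v,c)\le d$, so $v\in B$; therefore $\mathcal{C}=\bigcup_{i\in\N}C(G_i)\subseteq B$ is finite. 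This direction uses only {\bf (LF)}, through local finiteness of $G_\infty$, and not additivity.

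For the converse, assume $\mathcal{C}$ is finite; if $\mathcal{C}=\varnothing$ the property holds vacuously, so assume otherwise. Here I would exploit additivity: since only vertices and edges are added, $V_i\subseteq V_{i+1}$ and every path of $G_i$ survives in $G_{i+1}$, so for fixed $v,w$ the quantity $\dist_{G_i}(v,w)$ is non-increasing in $i$. Let $i_0$ be the least timestamp with $C(G_{i_0})\ne\varnothing$ and fix any $c\in C(G_{i_0})\subseteq\mathcal{C}$; additivity ensures $c\in V_i$ for all $i\ge i_0$, while $C(G_i)=\varnothing$ for $i<i_0$. For each $v\in\mathcal{C}$, whenever $v\in C(G_i)$ the instance $G_i$ is connected, so $\dist_{G_i}(v,c)$ is finite and, by monotonicity, bounded above by its value at the first such timestamp. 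Taking $d$ to be the maximum of these finitely many finite bounds over all $v\in\mathcal{C}$ yields the witness $(c,d)$.

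The main obstacle I expect is bookkeeping rather than depth, consistent with the claim that the lemma ``easily follows'' from {\bf (LF)}. The two delicate points are: (i) proving that balls of finite radius in $G_\infty$ are finite, which is precisely where local finiteness is indispensable --- drop {\bf (LF)} and a single vertex could accumulate infinitely many distinct center neighbours, breaking the forward direction; and (ii) in the converse, choosing $c$ so that it is already present whenever the center is nonempty, and invoking additivity to convert the per-timestamp distances $\dist_{G_i}(v,c)$ into one uniform bound. I would also dispose of the boundary cases ($\mathcal{C}=\varnothing$, and instances of infinite radius) at the outset.
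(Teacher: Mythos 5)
Your proof is correct and follows essentially the same route as the paper's: the forward direction is the paper's induction on the sets $D_i$ repackaged as finiteness of balls in the locally finite limit graph $G_\infty$, and the converse is the paper's ``take a finite bound over the finitely many vertices of $\mathcal{C}$'' argument, with your explicit appeal to additivity/monotonicity of $\dist_{G_i}$ making precise the step the paper compresses into the assertion that $d=\limsup_{i}\max_{v\in C(G_i)}\dist_{G_i}(c,v)$ lies in $\N$. If anything you are slightly more careful than the paper about the degenerate cases ($\mathcal{C}=\varnothing$, disconnected instances) and about why the per-timestamp distances admit a uniform bound.
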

\begin{proof} Suppose $\mathcal{C}$ is finite. Pick any $c\in \mathcal{C}$ and set $d$ as $\limsup_{i\in \N} \max_{v\in C(G_i)} \dist_{G_i}(c,v)$; $d$ must be in $\N$ as $\mathcal{C}$ is finite. Therefore $\forall i\!\geq\!0\colon C(G_i)\!\subseteq\! \{v\in V_\G\mid \dist_{G_i}(v,c)\!\leq\!d\}$. 

Conversely, suppose $\G$ has a bounded center. Then $\mathcal{C}$ is a subset of the union $D_0\cup D_1\cup \cdots \cup D_d$ where each $D_i=\{v\in V_\G\mid \exists j\in \N\colon \dist_{G_j}(v,c)= i\}$, $0\leq i\leq d$. Clearly, $D_0=\{c\}$. Suppose $D_i$ is a finite set, by {\bf (LF)}, $\bigcup_{v\in D_i} \{w\mid vw\in E_j,j\in \N\}$ is also finite. Therefore, $D_{i+1}$ is also finite, and hence $\bigcup_{i\in \N}C(G_i)$ is finite.
\end{proof}

\begin{theorem}\label{thm:bounded center}
There exists a broker tactic for the class of all $(1,\ell)$-confined IPs with a bounded center.
\end{theorem}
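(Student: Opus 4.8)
The plan is to exploit Lemma~\ref{lem:bounded center}: the bounded-center hypothesis makes $\mathcal{C}=\bigcup_{i}C(G_i)$ finite, so only finitely many vertices are ever central. The tactic I would use does not link $u$ to the center directly (which, as a growing ``spider'' with equal-length legs shows, would leave $u$ one step too far), but instead links $u$ to the \emph{neighbors} of current center vertices. Concretely, fix a well-ordering of $V_\I$ and define $\beta(G)$ to return the least vertex $w\neq u$ that is adjacent in $G$ to some vertex of $C(G)$ and not yet adjacent to $u$, and $\varnothing$ if no such $w$ exists. This is a legitimate tactic ($|\beta(G)|\le 1=k$, and it never duplicates an existing edge of $u$), and $\mathcal{P}\cap\IP(\beta)\neq\varnothing$ follows by pairing $\beta$ with the trivial (empty) evolution trace, which produces a finite, hence bounded-center, $(1,\ell)$-confined IP.

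The crux is a local criterion that I would establish first by a one-line distance argument: if at some timestamp $i$ a vertex $c$ is central and every neighbor of $c$ is either $u$ or adjacent to $u$, i.e. $N_{G_i}(c)\subseteq N_{G_i}(u)\cup\{u\}$, then $u\in C(G_i)$. Indeed, routing any shortest path from $c$ to a vertex $v\neq c$ through its first edge gives $\dist_{G_i}(u,v)\le \dist_{G_i}(c,v)\le \ecc_{G_i}(c)=\rad(G_i)$, so $\ecc_{G_i}(u)=\rad(G_i)$. Thus it suffices to drive $u$ into exactly this configuration relative to some persistently central vertex.

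To do so I would combine three finiteness facts. First, since each $C(G_i)$ is nonempty and contained in the finite set $\mathcal{C}$, the infinite pigeonhole principle yields a vertex $c^\ast$ that is central at infinitely many timestamps. Second, by \textbf{(LF)} the neighborhood of $c^\ast$ is nondecreasing and bounded by the finite set $E_\I(c^\ast)$, so it stabilizes to some $N^\ast$ from a timestamp $T_1$ onward. Third, every vertex that $\beta$ ever selects is a neighbor of a center vertex, hence lies in the finite set $\bigcup_{c\in\mathcal{C}}E_\I(c)$, so $u$ can acquire only finitely many new neighbors in total. The main obstacle---the step I would treat most carefully---is showing that $u$ in fact links to \emph{all} of $N^\ast\setminus\{u\}$: if some $y\in N^\ast\setminus\{u\}$ were never linked to $u$, then at each of the infinitely many timestamps $i\ge T_1$ with $c^\ast\in C(G_i)$ the vertex $y$ would be an eligible candidate, forcing $\beta(G_i)\neq\varnothing$ and so adding a fresh neighbor to $u$; this would give $u$ infinitely many distinct neighbors drawn from a finite pool, a contradiction.

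Finally I would assemble the pieces: once $u$ is adjacent to all of $N^\ast\setminus\{u\}$, which occurs by some $T_2\ge T_1$ because $N^\ast$ is finite, I pick any timestamp $i\ge T_2$ at which $c^\ast$ is central (infinitely many exist). There $N_{G_i}(c^\ast)=N^\ast\subseteq N_{G_i}(u)\cup\{u\}$ and $c^\ast\in C(G_i)$, so the local criterion gives $u\in C(G_i)$; hence $\I$ is a broker scheme. As $\I$ was an arbitrary member of $\mathcal{P}\cap\IP(\beta)$, the tactic $\beta$ is a broker tactic. The routine boundary cases---connectivity of the instances and the degenerate star case $\rad(G_i)=1$, where one instead checks $\dist_{G_i}(u,c^\ast)\le 1$ directly---I would dispatch separately without disturbing the argument.
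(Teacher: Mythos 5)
Your proposal is correct and follows essentially the same route as the paper: the same tactic idea (link $u$ one at a time to neighbors of current center vertices), combined with Lemma~\ref{lem:bounded center} and \textbf{(LF)} to argue that the pool of candidates is finite and eventually exhausted. The one difference worth noting is that you are more careful at the two points the paper glosses over: the paper simply asserts that all of $\mathcal{C}'$ eventually gets linked and that this places $u$ in $C(G)$, whereas you supply the missing local criterion ($N_{G_i}(c)\subseteq N_{G_i}(u)\cup\{u\}$ for a central $c$ forces $\ecc_{G_i}(u)=\rad(G_i)$) and the pigeonhole/stabilization argument singling out a persistently central $c^\ast$ with stabilized neighborhood $N^\ast$ --- this second step genuinely patches a gap, since the tactic only ever selects neighbors of the \emph{current} center, so saturation of all of $\mathcal{C}'$ does not follow directly from the tactic's definition, while saturation of $N^\ast$ does.
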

\begin{proof} Define tactic $\tau$ by 
$\tau(G)=\{v\}$ where $\exists w\in C(G)\colon \dist_G(v,w)\leq 1$ and $uv\notin E$ if such a vertex exists; $\tau(G)=\varnothing$ otherwise. 
To show that there is some $\I\in \IP(\tau)$ that has a bounded center, simply take $G_0\!=\!(\{x_0,v,y_0\},\{x_0v,vy_0\})$ and evolution trace $F_1,F_2,\ldots$ where the edges in each $F_i$ are $y_iy_{i-1},x_{i-1}x_i$, $i>0$. The corresponding IP $\I\in \IP(\tau)$ will set $G_i=G_{i-1}\oplus (F_i\oplus \tau(G_{i-1})$. Clearly $C(G_i)=\{u,v\}$ after timestamp 3 when all edges $uv,ux_0,uy_0$ are added, and $\tau(G_i)=\varnothing$ for all $i>3$.

Take an IP $\I\in \IP(\tau)$ with a bounded center. By Lem.~\ref{lem:bounded center}, 
 $\mathcal{C}=\bigcup_{i\in \N} C(G_i)$ is finite. By {\bf (LF)}, the set $\mathcal{C'}=\bigcup_{i\in\N}\{w\in V_\I\mid \dist_{G_i}(w,v)\leq 1, v\in \mathcal{C}\}$ is also finite.
Thus for some timestamp $t$, all edges $uv$  where $v\in \mathcal{C'}$ would have been added to $G_t$. At this timestamp, $u$ belongs to $C(G)$ and the IP is a broker scheme.
\end{proof}

\section{Cost-Effective Tactics and Evaluations} \label{sec:tactic}
As shown empirically below, real networks usually have bounded centers, but the tactic in the proof of Thm.~\ref{thm:bounded center} is unpractical as it may create arbitrarily many edges. To fix a framework where tactics are comparable, from now on, we focus on tactics $\tau$ that add a single edge in a timestamp (i.e., $|\tau(G)|\!=\!1$) until $u$ enters $C(G)$. By the {\em cost} of an IP, we mean the number of timestamps elapsed before $u$ enters the center (infinite if $u$ never enters the center). We are interested in {\em cost-effective} tactics that result in the least expected cost.

\paragraph*{\bf Uset-based tactics.} Over static networks, our problem reduces to finding a minimum broker set. The problem is shown to be NP-complete by \cite{moskvina2016build} who also gave several cost-effective tactics. At any timestamp $i$ of $\I\!=\!G_0,G_1,\ldots$, the {\em uncovered set} (Uset) $U_{i}$ is $\{v\!\in\!V_i\mid \dist_{G_i}(u,v)\!>\!\rad(G_i)\}$. Two tactics, named $\SDeg$ and $\SBtw$ resp., add an edge from $u$ to a vertex $v\!\in\!U_{u,i}$ that has maximum degree (as in $\SDeg$) or betweenness (as in $\SBtw$) centrality. Over static networks, their tactics build a {\em sub-radius dominating set} which corresponds to a broker set and is normally small, e.g., $\SDeg$ finds a broker set of size 4 on a (static) collaboration network with $>8600$ vertices.

\paragraph*{\bf Rset-based tactics.} A downside, however, lies with the Uset-based tactics over core/periphery structures: Once a link is created from $u$ to someone in the core, these tactics would forbid further links with those that are also in the core (as they are ``covered''). As a result, they result in suboptimal solutions.
We thus modify the method by allowing $u$ to link with some vertices in the Uset, as long as they are close to uncovered vertices. More formally, we define a {\em remote-center set} (Rset) at timestamp $i$ of $\I\!=\!G_0,G_1,\ldots$ as $R_i\!=\!\{v\!\in\!V\mid \dist_{G_i}(x,v)\!>\!\rad(G_i)\}$, where $x$ is a furthest vertex from $u$. We introduce $\RDeg$ and $\RBtw$ as tactics that, instead of choosing vertices from $U_i$ at timestamp $i$, links $u$ with a $v$ in the Rset $R_i$ that has the largest degree (in $\RDeg$) or maximum betweenness (in $\RBtw$) centrality. To contrast these tactics, Fig.~\ref{fig:R vs U} shows an example where $\SDeg$ gave suboptimal solutions for both static/dynamic case; $\SBtw$ gave an optimal solution for static but not for the dynamic case; and $\RDeg$/$\RBtw$ gave optimal solutions for both cases.

\begin{figure}[h]
\centering
\resizebox{!}{3cm}{\includegraphics{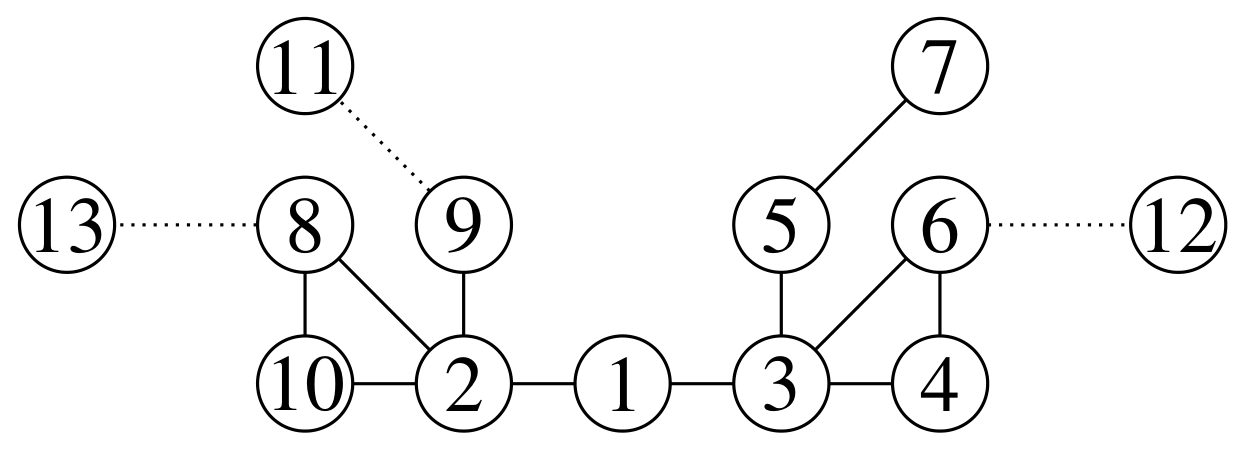}}
\caption{Contrasting $\RDeg$/$\RBtw$ with $\SDeg$/$\SBtw$. $G_0$ contains $\{1,\ldots,10\}$ and (solid) edges among them. The edges $\{9,11\}$, $\{6,12\}$, $\{8,13\}$ (dash) are added one at a time in three timestamps. Treating $G_0$ as static, $\SDeg$ and $\SBtw$ link $u$ to $\{2,6,7\}$ and $\{3,8\}$ (optimal), resp. For the dynamic network, both $\SDeg$ and $\SBtw$ create three edges in 3 timestamps ($\SDeg$ links to $2,6,7$, $\SBtw$ links to $2,5,12$), while $\RDeg$ and $\RBtw$ links to $2,3$ in 2 timestamps.}\label{fig:R vs U} 
\end{figure}

\paragraph*{\bf MUF.} Another tactic for $u$ is to link with neighbors of a center vertex $c$, thus getting into the center. To minimize cost, $c$ is chosen to have the least degree in $C(G)$. A heuristic then selects from the {\em most useful friends} (MUF) of $c$, which are defined as neighbors of $c$ that are at distance $\rad(G)-1$ from the furthest vertex from $u$.  Alg.\ref{alg:MUF} implements this tactic for one timestamp (when $u\notin C(G)$). This tactic will work in dynamic networks whose center does not change much.

%

 \begin{algorithm}[h]
 \caption{Most-Useful-Friends (MUF) tactic}\label{alg:MUF}
 \begin{algorithmic}
 \item[INPUT] A graph $G=(V,E)$, newcomer $u$
 \State $c\leftarrow \arg\min_{v\in C(G)}\deg(v)$. \Comment{center with min degree}
 \State If $u$ is isolated, return $v$ adjacent to $c$ with max degree.
 \State $x\leftarrow \arg \max_{i\in V}\dist_G(i,u)$
 \State $F\leftarrow \{v\in V\mid vc\in E,\dist_G(v,x)=\rad(G)-1\}$
 \State Return $v\!\in\!F$ not adjacent to $u$ with max degree.
 \end{algorithmic}
 \end{algorithm}

We run and evaluate the tactics on 4 real-world datasets.

\paragraph*{\bf Datasets.} The number of timestamps in these networks ranges from 50 to $\sim60000$.
\begin{description}
\item[CollegeMsg network (Msg)] is a timestamped online social network at the University of California, Irvine \cite{panzarasa2009patterns}; An edge ${jk}$ denotes a message sent between $j$ and $k$.
\item[Bitcoin OTC trust network (Bitcoin)] record anonymous Bitcoin trading on Bitcoin OTC with temporal information \cite{kumar2016edge}. 
 An edge ${jk}$ denotes a trade between $j$ and $k$. 
\item[Cit-HepPh network (Cit)]  is a high-energy physics citation network \cite{leskovec2007graph}, which collects all papers from 1992 to 1998 on arXiv; An (undirected) edge $jk$ denotes that paper $j$ cites paper $k$. 
\item[Trade network (Trade)] denotes yearly world trade partnership, 1951 -- 2009 \cite{JacksonandNei2015}; Edges represent trade partnership which is defined based on import/export between two countries.
\end{description}
All networks above, apart from Trade, has only additive changes to the network. 
Table~\ref{tab:dataset} shows multiple statistics of the last networks instance. The goodness of fit shows how well nodal degrees align with a power-law distribution, indicating a clear scale-free property. clus-coef and diam show that the networks have high clustering coefficient and low diameter, indicating small-world property. Cp-coef is a metrics for core-periphery structure; a positive value indicates a clear core/periphery structure \cite{holme2005core}.
Fig.~\ref{fig:coredist} analyzes temporal properties of the networks. It is clearly seen that, despite the continuous expansion of the network (in size), the graph center gains little in terms of diameter. Moreover, the location of the center does not shift as the maximum distance between a fixed vertex $v$ and vertices in the center stay bounded by a small distance during all timestamps.


\smallskip


\begin{table}[h]
  \centering 
  \caption{Statistics of Real-world Networks (Last Timestamp)}\label{tab:dataset}
  \begin{tabular}{|l|c|c|c|c|}
  \hline  &Trade&Msg&Bitcoin&Cit \\ \hline
  \hline $|V|$ &176&1899&5875&14083  \\
  \hline $|E|$  &1229&20296&21489&104211 \\
  \hline clust-coef &0.54&0.10&0.17&0.26  \\
  \hline max.deg &113&255&795&266 \\
  \hline diam &4&8&9&15 \\
  \hline center size &118&1&16&61  \\
  \hline timestamps &50&59835&35592&2000 \\
  \hline goodness of fit &0.74&0.89&0.86&0.91 \\
  \hline cp-coef&0.11&0.08&0.11&0.14      \\ \hline
  \end{tabular}
\end{table}

\begin{figure}
\includegraphics[width=\linewidth]{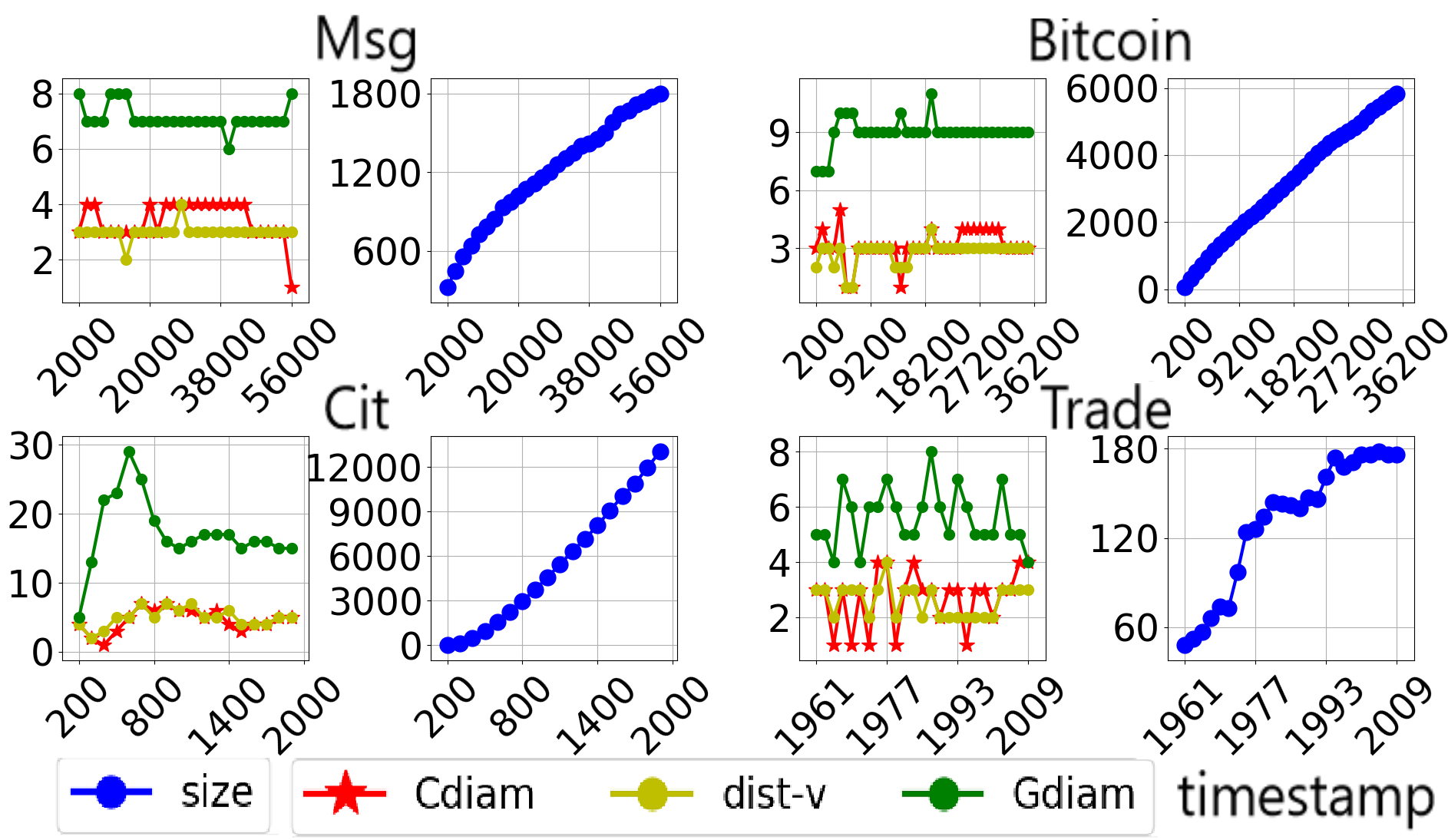}
\caption{Temporal properties of real-world networks. The horizontal axis is timestamps. \textsf{size} is the number of vertices in the network; \textsf{Gdiam} and \textsf{Cdiam} are resp. the diameter of the network and of the center. \textsf{dist}-$v$ is the maximum distance from any vertex  $C(G)$ to a fixed vertex $v$. \label{fig:coredist}}
\end{figure}


\paragraph*{\bf Experiment 1 (Cost).} The goal is to compare the tactics treating $\SDeg$/$\SBtw$ as benchmarks. For each dataset, we choose 28(Msg), 36(Bitcoin), 18(Cit), and 8(Trade) timestamps as an initial network from which IP are simulated. We also tune the interval between two consecutive timestamps where the newcomer $u$ adds an edge; See Fig.~\ref{fig:real} for results of tactics: $\RDeg,\RBtw$, MUF significantly outperform the benchmarks in all cases, obtaining costs generally below 10. They are robust in the sense that the costs vary little when starting from different initial network, while costs of $\SDeg$/$\SBtw$ dramatically increase as initial timestamp changes. To visually compare the tactics, Fig.~\ref{fig:2200} illustrates the result of $\SDeg$/$\RDeg$/MUF after running on an instance of Bitcoin with 2200 initial vertices, stopping when $u$ enters the center. $\SDeg$ apparently incurs higher cost building more edges than the other two tactics. It is also apparent that $\SDeg$ connects largely to peripheral vertices, while MUF positions $u$ well into the center.

\begin{figure}
\includegraphics[width=\linewidth]{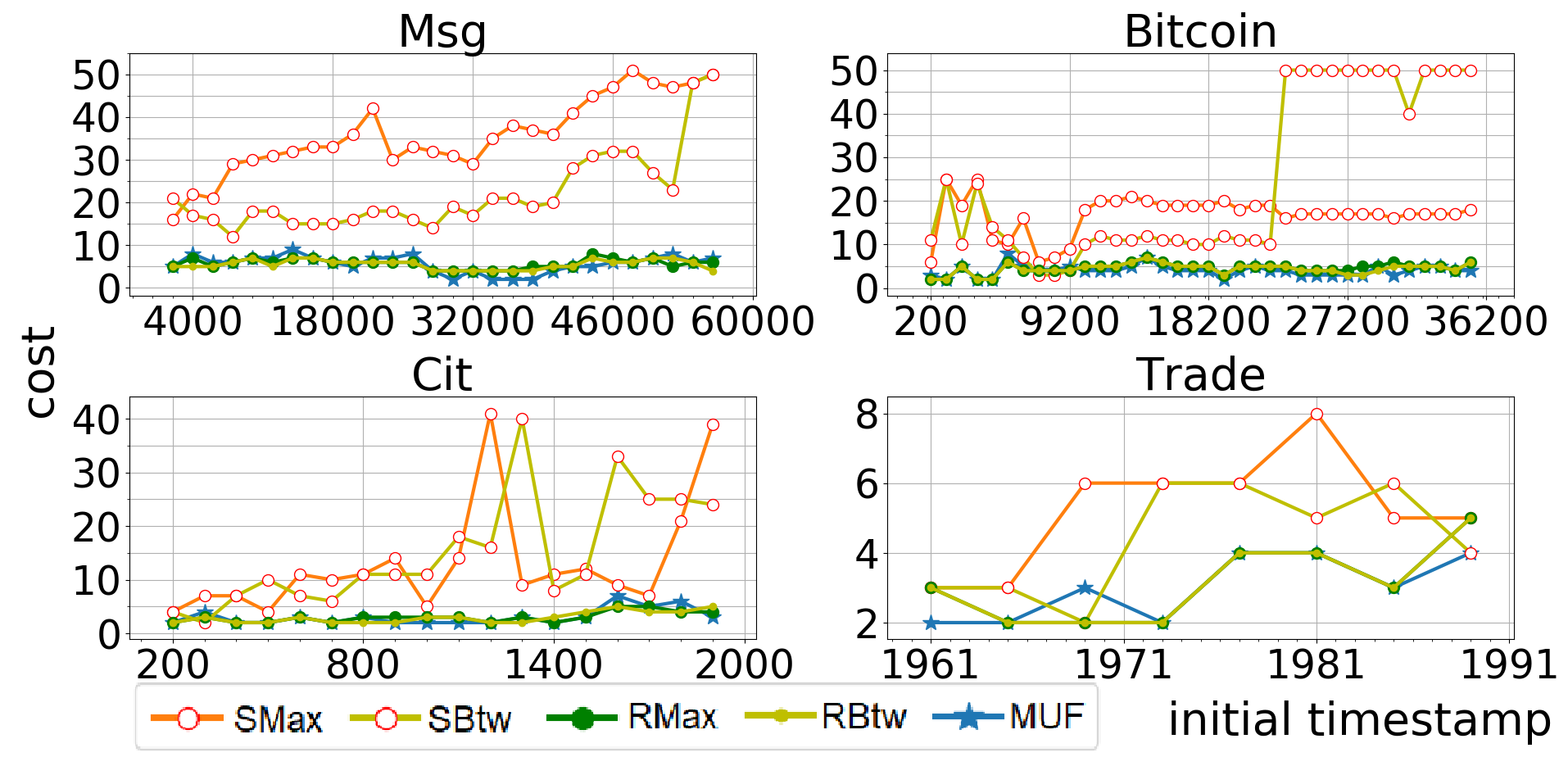}
\caption{(Top left) Msg has 28 initial timestamps and interval 50; (top right) Bitcoin has 36 initial timestamps and interval 10; (bottom left) Cit with 18 initial timestamps and interval 1; (bottom right) Trade has 8 initial timestamps and interval 1. The vertical axis indicates the cost of IP. \label{fig:real}} 
\end{figure}



\begin{figure}
  \centering
  \includegraphics[width=\textwidth]{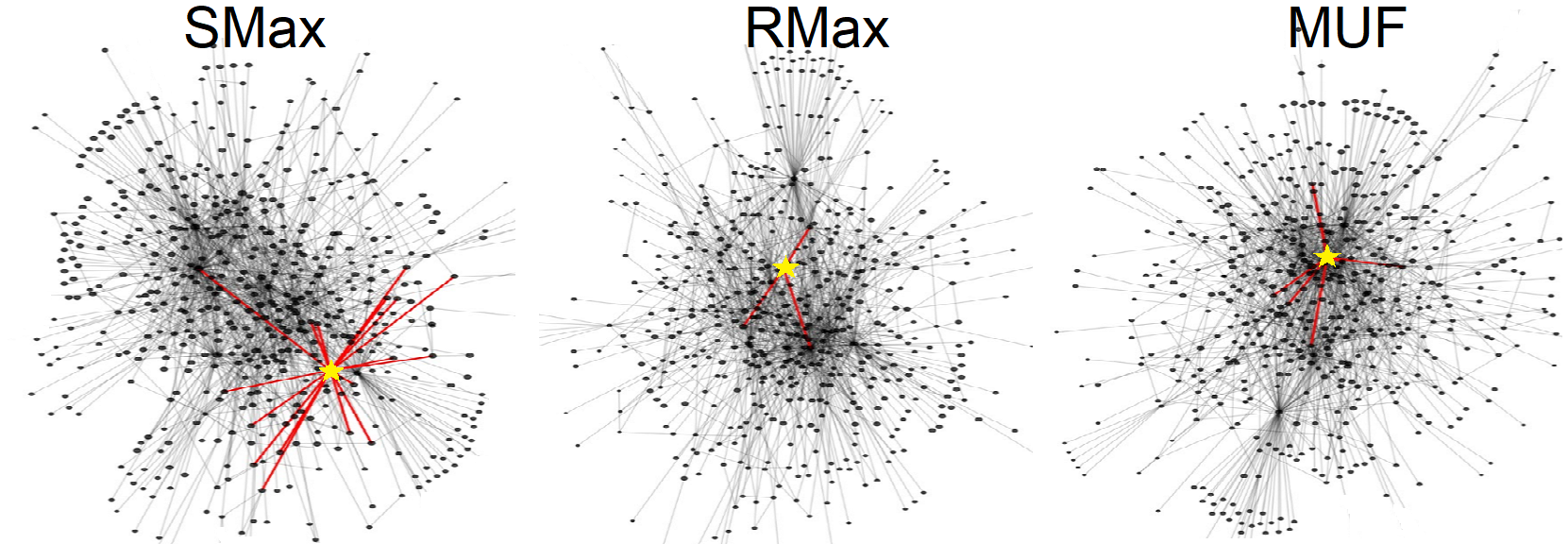}
  \caption{Result of $\SDeg$, $\RDeg$ and MUF over Bitcoin starting from 2200 vertices until $u$ (star) enters the center; red lines are edges built by the tactics.\label{fig:2200}}
\end{figure}

\section{Dynamic Center/Periphery Models}
To analyze factors attributing to tactic performance, we run dynamic network models of center/periphery structures. 


\paragraph*{\bf Dynamic BA model.} This well-established dynamic model 
takes a parameter $d\in \N$ and adds a new vertex at each timestamp who randomly links with $d$ vertices by a {\em preferential attachment} mechanism. Over multiple iterations, the graph develops a scale-free property, however, it fails to achieve a highly-clustered core.




\paragraph*{\bf Dynamic JR model.} The model proposed by \cite{jackson2007meeting} simulates stochastic friendship making among an agent population. An agent may link with a friend-of-friends or a random individual. 
At each timestamp, the model randomly samples for every vertex $v$ a set $S_1(v)$ of $m$ non-adjacent vertices from the entire network, and another set $S_2(v)$ of $m$ vertices who are at distance 2 from $v$ ($S_1(v)$ and $S_2(v)$ may not be disjoint). It then builds edges between $v$ and every vertex in $S_1(v)\cup S_2(v)$ with probability $p$. As argued in \cite{jackson2007meeting}, the model meets most of the desired properties such as scale-free and small-world properties. The value $m\approx d/4p$ relies on $p$ and an expected average degree $d\in \N$ which are parameters of the model. We pick $p=\{0.25,0.5,1\}$ to resemble the fitted values on the real-world networks in \cite{jackson2007meeting}.



\paragraph*{\bf Dynamic rich-club.} {\em Rich-club} has been a ``go-to'' model of a core/periphery structure which develops a dense, central core with a sparse periphery \cite{bornholdt2001world,csermely2013structure}.
At each timestamp, the process adds a new vertex with probability $\alpha\in[0,1]$ (and links it with a random vertex) or a link between two existing vertices with probability $1-\alpha$. If the latter case, it chooses a random source $w\in V$ and links it with a target $z$ as follows: For every $k\in \N$, set $[k]\!=\!\{v\!\in\!V\mid \deg(v)\!=\!k\}$; The probability that $z\!\in\![k]$ is $\propto\! k[k]$. The probability $\alpha$, computed by $\alpha\!=\!2(N+1)\!/\!(Nd\!+\!2)$, depends on the targeted average degree $d$ and graph size $N$, which are parameters in the model.


\paragraph*{\bf Dynamic onion.}  An {\em onion} is a core/periphery structure, but unlike in a rich-club, peripheral vertices here are connected to form one or several layers surrounding the core, resembling highly resilient networks, e.g., criminal rings
\cite{csermely2013structure}. The original static onion model generates a network with a fixed a power-law degree distribution $q(k)\!\sim\!k^{-\gamma}$ (where $\gamma\!\in\!\mathbb{R}$ depends on the average degree $d$). 
We dynamize this model so that vertices are iteratively added, loosely speaking: At each timestamp, we (1) add a new vertex $v$ whose degree $\deg(v)\!=\!k$ with probability $q(k)$; 
(2) To add $v$ to $G$ while preserving the degree distribution, create a {\em pool of ``studs''} (i.e., half-edges) initially containing $k$ studs attached to $v$; (3) randomly severe $k$ existing edges into $2k$ studs which are added to $L$; (4) repeatedly ``join'' random pairs of studs $v,w$ in $L$ to form edge $vw$ with probability $p_{vw}\!=\!(1\!+\! 3|s_v\!-\!s_w|)^{-1}$, taking care to avoid self-loops and duplicates, until $L=\varnothing$ \cite{wu2011onion}. 

Table~\ref{tab:models} summarizes key statistics of the models minding that they share the same parameter -- average degree $d\in \N$. Here we set $d=6$ to resemble values in empirical data sets\footnote{80 datasets on KONECT and SNAP have average degree between 2 and 10 \url{http://konect.uni-koblenz.de/}, \url{http://snap.stanford.edu/}}, the network size 500 and the initial network being a cycle graph with length 10, as for BA model in \cite{barabasi1999emergence}. For the JR model, a column is created for each value of $p\in \{0.25,0.5,1\}$. The rich-club and onion models have exceptionally high CP coefficient showing a clear core/periphery structure. Fig.~\ref{fig:models} visually contrasts the four models clearly displaying the core in rich-club and onion, while for BA and JR the center is not clear.


\begin{table}
  \centering   \caption{Key Statistics of the Models with $d=6$ and $N=500$}\label{tab:models}
  \resizebox{0.96\linewidth}{!}{%
  \begin{tabular}{|l|c|c|c|c|c|c|c|c}
\hline  &BA&JR0.25&JR0.5&JR1&rich-club&onion \\ \hline
 \hline clustering&0.04&0.29&0.34&0.24&0.04&0.21   \\
 \hline max deg &53.4&31.6&32.9&27.18 &46.44&118  \\
 \hline center size &120.5&33.0&47.1&19.18&27.16&3   \\
 \hline diameter &5.6&8.3&7.84&6.54&9.46&7.4\\
 \hline radius &4&5&4.7&4.08&5.2&4.18\\
 \hline cp-coef&-0.09&0.04&0.02&-0.05&0.13&0.26 \\ \hline
\end{tabular}}
\end{table}

\begin{figure}
  \centering
  \includegraphics[width=0.9\textwidth]{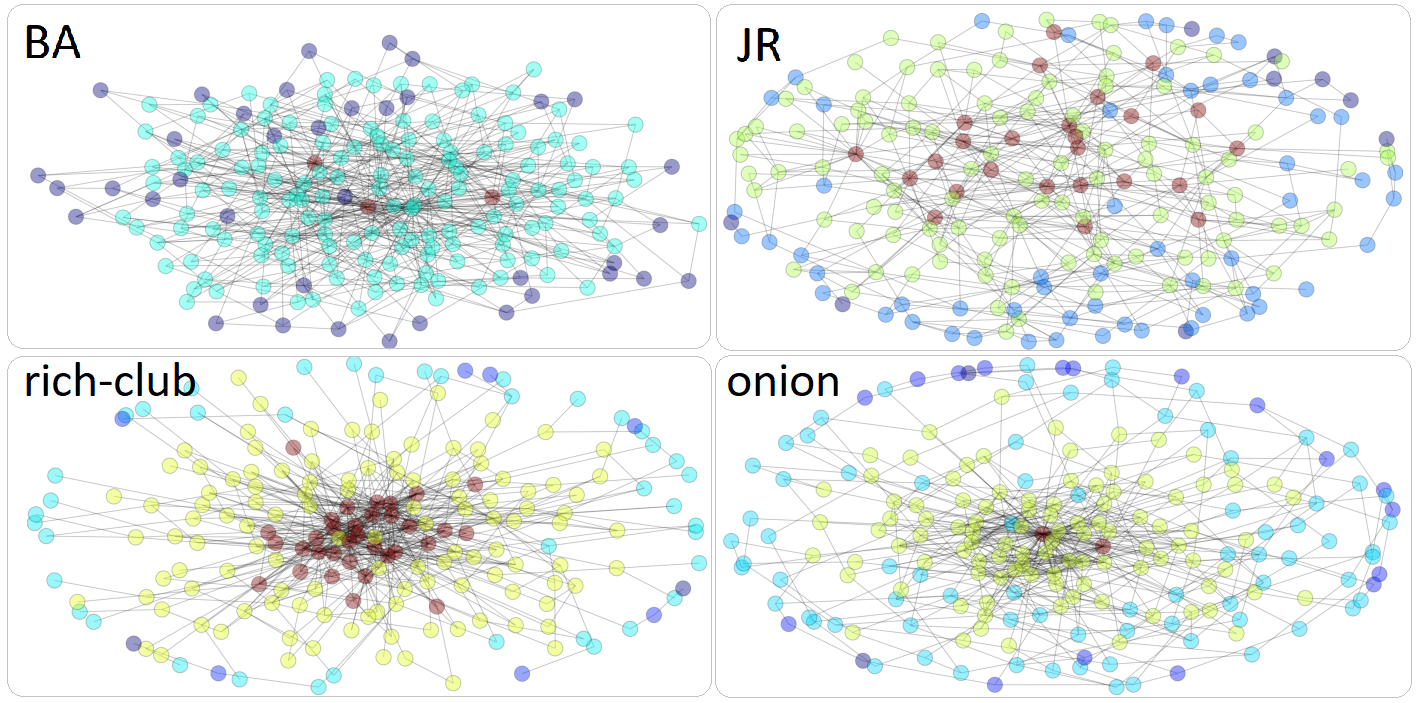}
  \caption{Illustrations of the four dynamic network models. Nodes are colored by eccentricity from lowest (blue) to highest (red).}\label{fig:models}
\end{figure}

\paragraph*{\bf Experiment 2.} We run all tactics treating $\SDeg$ \& $\SBtw$ as benchmarks over synthetic dynamic networks. IPs are simulated using the models above; the initial network is generated by the corresponding model and has size 500. There are several parameters which we may adjust. The first is the average degree $d$ which corresponds to the speed of adding edges to the network at each timestamp. 
The second is the {\em growth rate} $\ell$ of the network, which is the number of vertices that can be added in each timestamp.
Firstly, we take $d=2,\ldots,10$ and fix $\ell=1$; the costs of all tactics are plotted in Fig.~\ref{fig:ave}.
Then, we fix $d=6$ and adjust $\ell$ from 10 to 500; the costs are plotted in Fig.~\ref{fig:intervalSyn} (so that the resulting IP is $(1,\ell)$-confined). All values in Fig.~\ref{fig:ave} and Fig.~\ref{fig:intervalSyn} are averaged among 100 trials.

We make several discussions: $\bullet$ Apparent from the plots, $\RDeg$, $\RBtw$ and MUF places $u$ into the center with much less costs compared to the benchmarks; The cost of these tactics is also very stable where the cost remains below 10 for every model even when $d=14$ or $\ell=500$. Recall from Thm.~\ref{thm:not exist} that when the network expands more rapidly,  potentially no broker tactic would exist leading to an infinite cost; our experiment show that this would not happen for the four models of dynamic networks.
$\bullet$ The gap in cost between $\RDeg$/$\RBtw$/MUF and $\SDeg$/$\SBtw$ gets very wide (5 - 8 times) for models  with a high CP-coefficient (rich-club, onion). This may be due to dense ties among core members resulting in them being excluded by $\SDeg$/$\SBtw$. 
\begin{itemize}
\item[$\bullet$] Tactics have relatively similar performance over BA and JR-1 networks; This may be due to the lack of a tight-knit core in these two models.
%
%
\item[$\bullet$] A faster growth rate $\ell$ (with a fixed $d$) would not affect the costs of tactics as the tactics exploit the central vertices which are relatively stable regardless of $\ell$.
\item[$\bullet$] The vertices with high betweenness tend to locate around the center, so tactics with maximum betweenness have better performance on high CP-coefficient networks.
\end{itemize}
 \begin{figure}
  \centering
  \includegraphics[width=\linewidth]{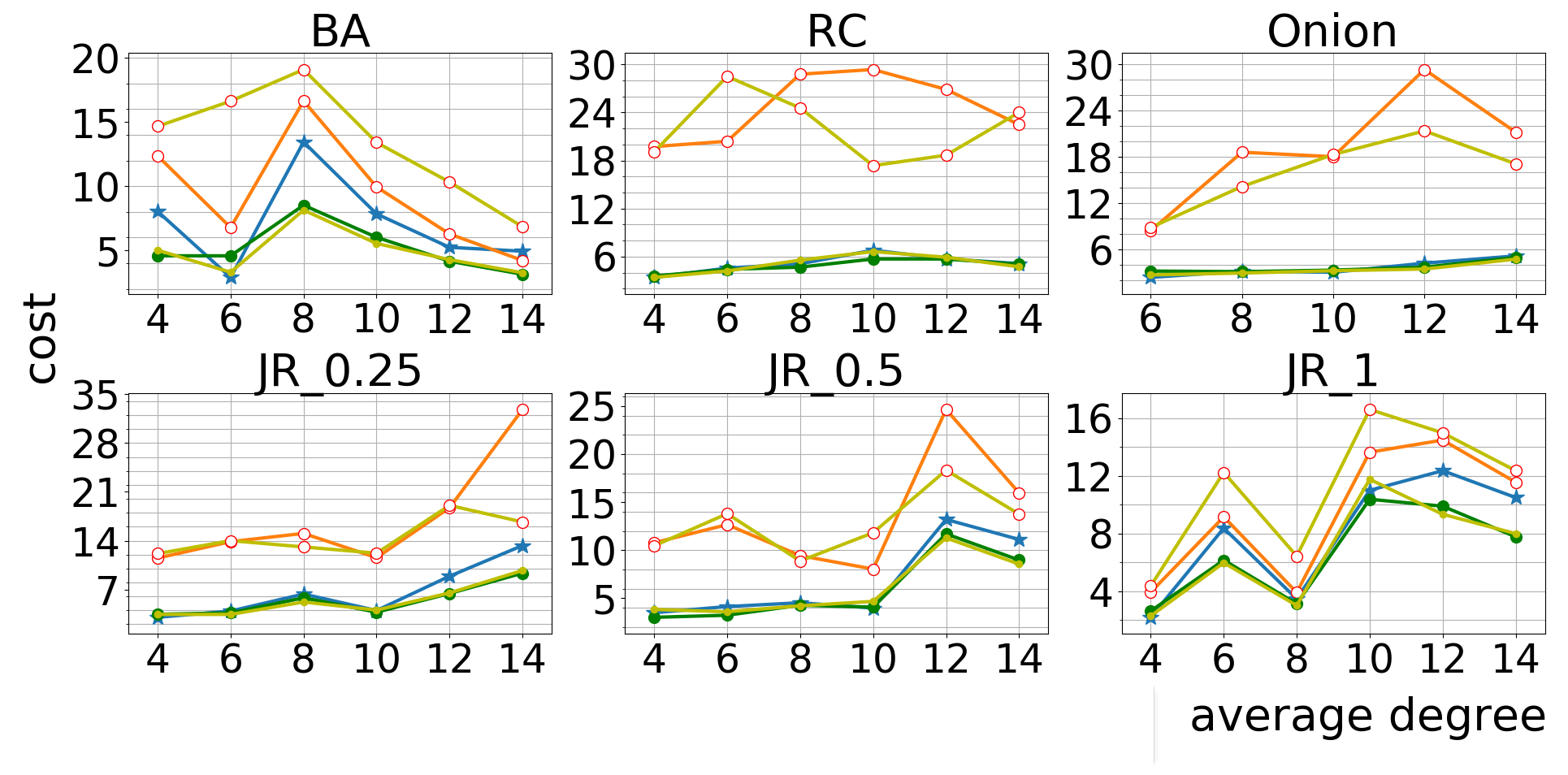}
  \caption{Costs of tactics performed on each model with varying average degree. \label{fig:ave}} 
\end{figure}




\begin{figure}
  \centering
  \includegraphics[width=\textwidth]{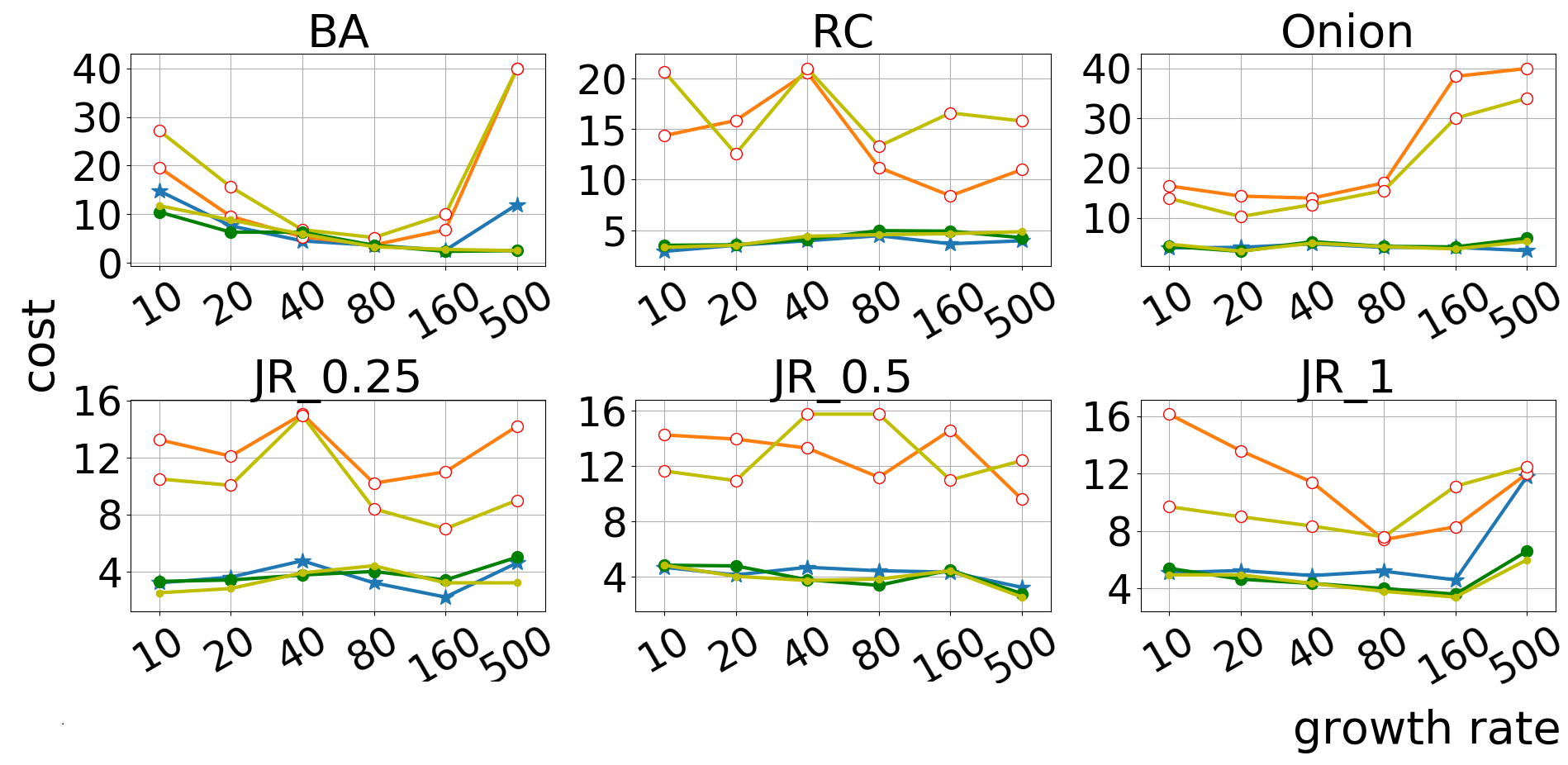}

  \caption{Costs of tactics performed on each model with varying growth rate.}\label{fig:intervalSyn}
\end{figure}



\section{Conclusion and Future Work}\label{sec:conclusion}
We develop a structural  investigation into the process where a newcomer integrates into a dynamic network through building ties. 
Our conclusions concern with conditions that warrant the existence of a broker tactic and simple cost-effective tactics over center/periphery networks. Five tactics are extensively compared on four real world datasets and four dynamic network models.

Modeling network dynamics has posed many challenges and we hope this work addresses some of them by providing a new angle and further insights. Many future work remain: (1) It is a natural question to explore dynamic models where ties are added as well as severed. (2) A distinction exists between the notions of network core and center \cite{borgatti2000models}; A future question would be to investigate tactics that place the newcomer into the core, rather than just the network center. (3) Community structure is another prevalent meso-scale property and the same question could be targeted at dynamic community structure models. (4) Moving from the tactics of a single agent to a population of agents, one may formulate and investigate game-theoretical models of network formation based on the notions of social capital.


\bibliographystyle{plain}
\bibliography{bib}

\end{document}